\pgfplotsset{compat=1.18} % Adicione esta linha
\newtheorem{theorem}{Theorem}[section]
\newtheorem{lemma}{Lemma}[section]
\newtheorem{definition}{Definition}[section]
\title{Revolutionizing Fractional Calculus with Neural Networks: \\ Voronovskaya-Damasclin Theory for Next-Generation AI Systems}
\author{
	Rômulo Damasclin Chaves dos Santos \\
	Santa Cruz State University \\ Department of Exact Sciences\\
	\texttt{rdcsantos@uesc.br}
	\and
	Jorge Henrique de Oliveira Sales \\
	Santa Cruz State University \\ Department of Exact Sciences\\
	\texttt{jhosales@uesc.br}
}
\begin{document}
	
	\maketitle
	
\begin{abstract}
	This work introduces rigorous convergence rates for neural network operators activated by symmetrized and perturbed hyperbolic tangent functions, utilizing novel Voronovskaya-Damasclin asymptotic expansions. We analyze basic, Kantorovich, and quadrature-type operators over infinite domains, extending classical approximation theory to fractional calculus via Caputo derivatives. Key innovations include parameterized activation functions with asymmetry control, symmetrized density operators, and fractional Taylor expansions for error analysis. The main theorem demonstrates that Kantorovich operators achieve \(o(n^{-\beta(N-\varepsilon)})\) convergence rates, while basic operators exhibit \(\mathcal{O}(n^{-\beta N})\) error decay. For deep networks, we prove \(\mathcal{O}(L^{-\beta(N-\varepsilon)})\) approximation bounds. Stability results under parameter perturbations highlight operator robustness. By integrating neural approximation theory with fractional calculus, this work provides foundational mathematical insights and deployable engineering solutions, with potential applications in complex system modeling and signal processing.
	\newline
	\newline
	\textbf{Keywords:} Voronovskaya expansions, Symmetrized neural networks, Fractional calculus, Kantorovich operators.
\end{abstract}

	\tableofcontents
	
\section{Introduction}

Neural network approximation theory has evolved significantly since the foundational works of \cite{Haykin1998} on multilayer perceptrons, with particular attention to the properties of activation functions. While ReLU-based architectures dominate practical applications \cite{He2016}, smooth activations like the hyperbolic tangent remain crucial for theoretical analysis due to their differentiability and symmetry \cite{Anastassiou2011}. Recent advances by \cite{Anastassiou2023} on parameterized neural operators demonstrate how controlled deformations of activation functions can enhance approximation properties—a direction we extend through our perturbed hyperbolic tangent function:
\begin{equation}
	g_{q, \lambda}(x) := \frac{e^{\lambda x} - q e^{-\lambda x}}{e^{\lambda x} + q e^{-\lambda x}}.
\end{equation}

The critical need for fractional calculus in modern applications \cite{Diethelm2010} motivates our integration of Caputo derivatives into neural operator theory. Traditional approximation results \cite{Anastassiou1997} focused on classical differentiation, leaving open the problem of quantifying convergence for fractional operators—a gap addressed by our Voronovskaya-Damasclin theorem. Our work builds on \cite{Samko1993}'s framework for fractional integrals while addressing the regularization challenges identified in \cite{ElSayed2006} for finite-domain operators.

A significant contribution in this area is the work by Santos (2025), which explores the asymptotic behavior of univariate neural network operators with an emphasis on fractional differentiation over infinite domains \cite{Santos2025}. Santos introduces the Voronovskaya-Damasclin theorem, providing precise error estimates and convergence rates for symmetrized neural network operators. This work is pivotal as it extends classical results to fractional calculus via Caputo derivatives, offering a comprehensive framework for future research in multidimensional and stochastic settings. However, our current work differentiates itself by focusing on the interplay between operator parameters and approximation accuracy, providing a more nuanced understanding of convergence behavior for classical univariate operators and extending these insights to deep learning architectures.

Three key limitations persist in existing literature:
\begin{itemize}
	\item \textbf{Infinite Domain Handling:} Prior neural operators \cite{Anastassiou2024} required compact supports, unsuitable for turbulence modeling \cite{Frederico2007} or signal processing on unbounded domains.
	\item \textbf{Depth-Convergence Relationship:} Residual networks \cite{He2016} lack theoretical links between layer count \(L\) and fractional approximation rates.
	\item \textbf{Activation Parameter Tradeoffs:} While \cite{Anastassiou2023} studied parameterized activations, their impact on operator stability remained unquantified.
\end{itemize}

Our contributions resolve these through:
\begin{itemize}
	\item Symmetrized density operators \(\Phi(x) = \frac{1}{2}(M_{q, \lambda}(x) + M_{1/q, \lambda}(x))\) with exponential decay (\(\Phi(nx-k) \leq Ce^{-\gamma|nx-k|}\)), enabling infinite-domain convergence.
	\item Depth-dependent bounds \(\|\mathcal{N}(f, x) - f(x)\| = \mathcal{O}(L^{-\beta(N-\varepsilon)})\) for \(L\)-layer networks, extending \cite{He2016}'s empirical observations.
	\item Stability theorems proving \(\|C_n(f, x; q) - C_n(f, x; 1)\| \leq \delta n^{-\beta(N-\varepsilon)}\|D_{*x}^\alpha f\|_\infty\) under activation perturbations (\(|q-1| < \delta\)).
\end{itemize}

These advances enable novel applications in multiscale signal processing, particularly for algorithms requiring fractional edge detection \cite{Diethelm2010} and nonlocal operator approximations \cite{Frederico2007}. By unifying the operator frameworks of \cite{Anastassiou2011} and \cite{Samko1993}, we provide a pathway for neural networks to solve fractional PDEs while maintaining interpretable error bounds—a critical step toward certified scientific AI systems.

	% Seção Mathematical Foundations corrigida
\section{Mathematical Foundations of Symmetrized Activation Functions}

\begin{definition}[Perturbed Hyperbolic Tangent]
	The \textbf{perturbed hyperbolic tangent} activation function is defined as:
	\begin{equation}
		g_{q, \lambda}(x) := \frac{e^{\lambda x} - q e^{-\lambda x}}{e^{\lambda x} + q e^{-\lambda x}}, \quad \lambda, q > 0, \; x \in \mathbb{R}.
	\end{equation}
	This function generalizes the standard hyperbolic tangent function, which is recovered when \(q = 1\). The parameter \(\lambda\) controls the steepness of the function, while \(q\) introduces asymmetry. The behavior of \(g_{q, \lambda}(x)\) is crucial in neural network approximation as it allows for controlled deformations, enhancing the flexibility of the activation function in modeling complex patterns.
\end{definition}

\begin{lemma}[Positivity and Normalization of Density Function]
	For all \(x \in \mathbb{R}\) and \(q, \lambda > 0\), the density function \(M_{q, \lambda}(x)\) satisfies:
	\begin{equation}
		M_{q, \lambda}(x) > 0 \quad \text{and} \quad \int_{-\infty}^{\infty} M_{q, \lambda}(x) \, dx = 1.
	\end{equation}
	This lemma ensures that \(M_{q, \lambda}(x)\) is a valid probability density function, which is essential for the construction of neural network operators that approximate continuous functions.
\end{lemma}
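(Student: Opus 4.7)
The plan is to reduce the lemma to two properties of the parameterized activation \(g_{q,\lambda}\): its strict monotonicity and its prescribed limits at \(\pm\infty\). The density \(M_{q,\lambda}\) in this family of constructions is assembled as the normalized discrete difference \(M_{q,\lambda}(x)=\tfrac{1}{4}\bigl(g_{q,\lambda}(x+1)-g_{q,\lambda}(x-1)\bigr)\); I will take this as the definition (inherited from the surrounding Anastassiou-type framework) and structure the proof around it.

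First, I would establish positivity by differentiating \(g_{q,\lambda}\) directly from its rational-exponential form. A short computation gives
\begin{equation}
g_{q,\lambda}'(x)=\frac{4q\lambda}{\bigl(e^{\lambda x}+qe^{-\lambda x}\bigr)^{2}},
\end{equation}
which is strictly positive for all \(x\in\mathbb{R}\) whenever \(q,\lambda>0\). Consequently \(g_{q,\lambda}\) is strictly increasing, and for every \(x\) the increment \(g_{q,\lambda}(x+1)-g_{q,\lambda}(x-1)\) is strictly positive. This yields \(M_{q,\lambda}(x)>0\) on all of \(\mathbb{R}\).

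Next, I would handle the normalization by a Fubini/telescoping argument. Writing
\begin{equation}
\int_{-\infty}^{\infty}M_{q,\lambda}(x)\,dx=\frac{1}{4}\int_{-\infty}^{\infty}\int_{x-1}^{x+1}g_{q,\lambda}'(t)\,dt\,dx,
\end{equation}
I would swap the order of integration (justified by the positivity of \(g_{q,\lambda}'\) together with its integrable tails, since \(g_{q,\lambda}'(t)\) decays like \(e^{-2\lambda|t|}\)). Each \(t\) is covered by a slice of length \(2\) in the outer variable, giving \(\tfrac{1}{2}\int g_{q,\lambda}'(t)\,dt=\tfrac{1}{2}\bigl(g_{q,\lambda}(+\infty)-g_{q,\lambda}(-\infty)\bigr)\). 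A direct limit computation on the closed form of \(g_{q,\lambda}\) shows \(g_{q,\lambda}(+\infty)=1\) and \(g_{q,\lambda}(-\infty)=-1\), so the integral equals \(1\).

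The only genuinely delicate step is the order-of-integration swap, since the inner region depends on \(x\); I would handle it either by Tonelli (all integrands are nonnegative) or, more transparently, by truncating to \([-R,R]\), applying the fundamental theorem of calculus to get an antiderivative-difference expression, and sending \(R\to\infty\) using the explicit exponential decay of \(g_{q,\lambda}'\). Everything else reduces to elementary manipulations of exponentials, so I do not expect any further obstacle.
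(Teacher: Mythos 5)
Your proof is correct, and on the normalization half it takes a genuinely different --- and in fact sounder --- route than the paper. For positivity both arguments rest on strict monotonicity of \(g_{q,\lambda}\), though you actually establish it by computing \(g_{q,\lambda}'(x)=4q\lambda/(e^{\lambda x}+qe^{-\lambda x})^{2}>0\), whereas the paper merely asserts the monotonicity. For the integral, the paper splits \(\int M_{q,\lambda}\) into \(\tfrac14\bigl(\int g_{q,\lambda}(x+1)\,dx-\int g_{q,\lambda}(x-1)\,dx\bigr)\); since \(g_{q,\lambda}\to\pm1\) at \(\pm\infty\), each of those integrals diverges, so the ``cancellation'' step is not legitimate as written, and the paper then passes to \(\tfrac14\bigl[\lim_{x\to\infty}g_{q,\lambda}(x)-\lim_{x\to-\infty}g_{q,\lambda}(x)\bigr]=\tfrac14\cdot 2\), which it declares equal to \(1\) even though the displayed arithmetic gives \(\tfrac12\); the missing ingredient is precisely the factor \(2\) coming from the length of the window \([x-1,x+1]\). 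Your telescoping/Tonelli argument supplies exactly that factor: writing \(g_{q,\lambda}(x+1)-g_{q,\lambda}(x-1)=\int_{x-1}^{x+1}g_{q,\lambda}'(t)\,dt\), swapping the order of integration (justified by nonnegativity, or by truncation to \([-R,R]\) plus the \(e^{-2\lambda|t|}\) decay of \(g_{q,\lambda}'\)), and noting each \(t\) is covered by an \(x\)-slice of length \(2\) gives \(\tfrac14\cdot 2\cdot\bigl(g_{q,\lambda}(+\infty)-g_{q,\lambda}(-\infty)\bigr)=\tfrac12\cdot 2=1\). So your version both reaches the correct value and repairs the rigor gap in the paper's own computation; I see no missing steps.
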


\begin{proof}
	The density function \(M_{q, \lambda}(x)\) is defined as:
	\begin{equation}
		M_{q, \lambda}(x) := \frac{1}{4} \left[ g_{q, \lambda}(x+1) - g_{q, \lambda}(x-1) \right].
	\end{equation}
	
	Since \(g_{q, \lambda}(x)\) is strictly increasing, we have:
	\begin{equation}
		g_{q, \lambda}(x+1) > g_{q, \lambda}(x-1) \quad \text{for all} \quad x \in \mathbb{R}.
	\end{equation}
	
	Therefore,
	\begin{equation}
		M_{q, \lambda}(x) = \frac{1}{4} \left[ g_{q, \lambda}(x+1) - g_{q, \lambda}(x-1) \right] > 0.
	\end{equation}
	
	To show that the integral of \(M_{q, \lambda}(x)\) over the entire domain is 1, consider:
	\begin{equation}
		\int_{-\infty}^{\infty} M_{q, \lambda}(x) \, dx = \frac{1}{4} \int_{-\infty}^{\infty} \left[ g_{q, \lambda}(x+1) - g_{q, \lambda}(x-1) \right] \, dx.
	\end{equation}
	
	Splitting the integral into two parts, we get:
	\begin{equation}
		\frac{1}{4} \left( \int_{-\infty}^{\infty} g_{q, \lambda}(x+1) \, dx - \int_{-\infty}^{\infty} g_{q, \lambda}(x-1) \, dx \right).
	\end{equation}
	
	Using the substitution \(u = x+1\) in the first integral and \(v = x-1\) in the second integral, we have:
	\begin{equation}
		\frac{1}{4} \left( \int_{-\infty}^{\infty} g_{q, \lambda}(u) \, du - \int_{-\infty}^{\infty} g_{q, \lambda}(v) \, dv \right).
	\end{equation}
	
	Since \(u\) and \(v\) are dummy variables, the integrals are equivalent and cancel each other out:
	\begin{equation}
		\frac{1}{4} \left( \lim_{x \to \infty} g_{q, \lambda}(x) - \lim_{x \to -\infty} g_{q, \lambda}(x) \right).
	\end{equation}
	
	Given that \(g_{q, \lambda}(x)\) is a strictly increasing function that tends to 1 as \(x \to \infty\) and to -1 as \(x \to -\infty\):
	\begin{equation}
		\frac{1}{4} [1 - (-1)] = \frac{1}{4} \times 2 = 1.
	\end{equation}
	
	Thus, the integral of \(M_{q, \lambda}(x)\) over the entire domain is 1, concluding the proof. This normalization property is fundamental for ensuring that the density function can be used in probabilistic interpretations within neural network frameworks.
\end{proof}

\section{Main Results}

\subsection{Basic Operators}

\begin{theorem}[Approximation by Operators]
	Let \(0 < \beta < 1\), \(n \in \mathbb{N}\) be sufficiently large, \(x \in \mathbb{R}\), \(f \in C^N(\mathbb{R})\) such that \(f^{(N)} \in C_B(\mathbb{R})\) (bounded and continuous), and \(0 < \varepsilon \leq N\). Then:
	\begin{enumerate}
		\item The following approximation holds:
		\begin{equation}
			B_n(f, x) - f(x) = \sum_{j=1}^{N} \frac{f^{(j)}(x)}{j!} B_n \left( (\cdot - x)^j \right)(x) + o \left( \frac{1}{n^{\beta(N-\varepsilon)}} \right),
		\end{equation}
		where \(B_n\) is a linear operator and \(B_n((\cdot - x)^j)\) denotes the operator applied to monomials shifted by \(x\).
		
		\item If \(f^{(j)}(x) = 0\) for all \(j = 1, \ldots, N\), then:
		\begin{equation}
			n^{\beta(N-\varepsilon)} \left[ B_n(f, x) - f(x) \right] \to 0 \quad \text{as} \quad n \to \infty, \quad 0 < \varepsilon \leq N.
		\end{equation}
	\end{enumerate}
\end{theorem}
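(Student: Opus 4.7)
The plan is to reduce both claims to a single estimate on a remainder term, via Taylor's theorem with integral remainder applied to $f$ at the point $x$. Writing
\[
f(y) = f(x) + \sum_{j=1}^{N} \frac{f^{(j)}(x)}{j!}(y-x)^j + R_N(y,x), \qquad R_N(y,x) = \frac{1}{(N-1)!}\int_x^y (y-t)^{N-1}\bigl[f^{(N)}(t) - f^{(N)}(x)\bigr]\,dt,
\]
and applying $B_n$ (which is linear and preserves constants because $\sum_k \Phi(nx-k) = 1$, by the normalization lemma), part (1) becomes equivalent to showing $B_n(R_N(\cdot,x),x) = o\bigl(n^{-\beta(N-\varepsilon)}\bigr)$. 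Part (2) is an immediate corollary: if all $f^{(j)}(x)$ vanish for $j=1,\dots,N$, the Taylor sum drops out and what remains is exactly the remainder estimate.

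To estimate $B_n(R_N,x) = \sum_{k\in\mathbb{Z}} R_N(k/n, x)\,\Phi(nx-k)$, I would use the now-standard split $\mathbb{Z} = S_1 \cup S_2$ with
\[
S_1 = \bigl\{k \in \mathbb{Z} : |k/n - x| \leq 1/n^\beta\bigr\}, \qquad S_2 = \mathbb{Z}\setminus S_1,
\]
calibrated to the exponent $\beta$ appearing in the theorem. On $S_1$ the near-diagonal bound $|R_N(y,x)| \leq \frac{|y-x|^N}{N!}\,\omega\bigl(f^{(N)}; |y-x|\bigr)$ gives $|R_N(k/n,x)| \leq \frac{1}{N!\,n^{\beta N}}\,\omega(f^{(N)}; n^{-\beta})$, and continuity of $f^{(N)}$ at $x$ forces $\omega(f^{(N)}; n^{-\beta}) \to 0$, producing the $o(n^{-\beta N})$ contribution, which is absorbed into $o(n^{-\beta(N-\varepsilon)})$. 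Summing against $\Phi(nx-k)$ only costs a factor of $\sum_k \Phi(nx-k) = 1$.

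On $S_2$ I would exploit the exponential decay $\Phi(nx-k) \leq C e^{-\gamma|nx-k|}$ stated in the introduction. There $|nx-k| > n^{1-\beta}$, and the remainder admits the coarse bound $|R_N(k/n,x)| \leq \frac{2\|f^{(N)}\|_\infty}{N!}\,|k/n - x|^N$. The tail sum is therefore dominated by
\[
\frac{C'\|f^{(N)}\|_\infty}{n^N} \sum_{|nx-k| > n^{1-\beta}} |nx-k|^N\, e^{-\gamma|nx-k|},
\]
and since $|nx-k|^N e^{-\gamma|nx-k|/2}$ is bounded while the remaining exponential $e^{-\gamma|nx-k|/2} \leq e^{-\gamma n^{1-\beta}/2}$ decays faster than any negative power of $n$, this piece is trivially $o(n^{-\beta(N-\varepsilon)})$.

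The main obstacle I anticipate is bookkeeping in the tail estimate: I need $\Phi$'s exponential decay to dominate the polynomial factor $|k/n - x|^N$ uniformly in $x\in\mathbb{R}$, and I also need the geometric series in $k$ to converge with a constant independent of $n$. Both follow from standard comparison with an integral once the decay rate $\gamma > 0$ is fixed, but the quantification of "sufficiently large $n$" in the theorem statement is exactly what absorbs the implicit constants here. Once this uniform tail bound is in place, combining the $S_1$ and $S_2$ estimates yields part (1), and specializing to the vanishing-derivative hypothesis immediately yields part (2).
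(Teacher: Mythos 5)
Your proposal is correct and follows essentially the same route as the paper: Taylor expansion with integral remainder about $x$, substitution into $B_n$, and a split of the sum at $|k/n - x| = n^{-\beta}$, with the near part controlled by the size of $(k/n-x)^N$ and the far part by the exponential decay of $\Phi$. The only cosmetic difference is that you invoke the modulus of continuity of $f^{(N)}$ on the near part, whereas the paper uses the cruder bound $2\|f^{(N)}\|_\infty$ there — both suffice, since $O(n^{-\beta N}) = o(n^{-\beta(N-\varepsilon)})$ for any $\varepsilon > 0$.
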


\begin{proof}
	To prove the theorem, we use Taylor's theorem to expand \(f\) around \(x\) and analyze the asymptotic properties of the operators \(B_n\). Let \(f \in C^N(\mathbb{R})\), and expand \(f\) around \(x\) as:
	\begin{equation}
		f\left( \frac{k}{n} \right) = \sum_{j=0}^{N} \frac{f^{(j)}(x)}{j!} \left( \frac{k}{n} - x \right)^j + R_N\left(\frac{k}{n}\right),
	\end{equation}
	where the remainder term \(R_N\left(\frac{k}{n}\right)\) is given by:
	\begin{equation}
		R_N\left(\frac{k}{n}\right) = \int_{x}^{\frac{k}{n}} \frac{\left( \frac{k}{n} - t \right)^{N-1}}{(N-1)!} \left[ f^{(N)}(t) - f^{(N)}(x) \right] \, dt.
	\end{equation}
	
	We substitute this expansion into the definition of the operator \(B_n\):
	\begin{equation}
		B_n(f, x) = \sum_{k=-\infty}^{\infty} f\left(\frac{k}{n}\right) \Phi(n x - k),
	\end{equation}
	where \(\Phi\) is a weight function with appropriate support. Substituting, we get:
	\begin{equation}
		B_n(f, x) = \sum_{j=0}^{N} \frac{f^{(j)}(x)}{j!} B_n\left( (\cdot - x)^j \right)(x) + \sum_{k=-\infty}^{\infty} R_N\left(\frac{k}{n}\right) \Phi(n x - k).
	\end{equation}
	
	Define the error term:
	\begin{equation}
		R := \sum_{k=-\infty}^{\infty} R_N\left(\frac{k}{n}\right) \Phi(n x - k).
	\end{equation}
	
	To estimate \(R\), we consider two cases:
	
	\paragraph{Case 1: \(\left| \frac{k}{n} - x \right| < \frac{1}{n^\beta}\).}
	Within this interval, \(\Phi(n x - k)\) has significant support, and the regularity of \(f^{(N)}\) implies:
	\begin{equation}
		|R| \leq 2 \|f^{(N)}\|_\infty \cdot \frac{1}{N! n^{\beta N}}.
	\end{equation}
	
	\paragraph{Case 2: \(\left| \frac{k}{n} - x \right| \geq \frac{1}{n^\beta}\).}
	Outside the principal support, \(\Phi(n x - k)\) decays exponentially, so:
	\begin{equation}
		|R| \leq \frac{4 \|f^{(N)}\|_\infty}{n^N \lambda^N} \left(q + \frac{1}{q}\right) e^{2\lambda} e^{-\lambda \left(n^{1-\beta} - 1\right)}.
	\end{equation}
	
	Combining both cases, we obtain:
	\begin{equation}
		|R| = o\left(\frac{1}{n^{\beta(N-\varepsilon)}}\right).
	\end{equation}
	
	Substituting \(R\) back into the expansion, we prove the approximation:
	\begin{equation}
		B_n(f, x) - f(x) = \sum_{j=1}^{N} \frac{f^{(j)}(x)}{j!} B_n\left( (\cdot - x)^j \right)(x) + o\left(\frac{1}{n^{\beta(N-\varepsilon)}}\right).
	\end{equation}
	
	Finally, when \(f^{(j)}(x) = 0\) for \(j = 1, \ldots, N\), we have:
	\begin{equation}
		n^{\beta(N-\varepsilon)} \left[B_n(f, x) - f(x)\right] \to 0,
	\end{equation}
	as indicated, completing the proof.
\end{proof}

\subsection{Kantorovich Operators}

\begin{theorem}
	Let \(0 < \beta < 1\), \(n \in \mathbb{N}\) be sufficiently large, \(x \in \mathbb{R}\), \(f \in C^N(\mathbb{R})\) with \(f^{(N)} \in C_B(\mathbb{R})\), and \(0 < \varepsilon \leq N\). Then:
	\begin{enumerate}
		\item
		\begin{equation}
			C_n(f, x) - f(x) = \sum_{j=1}^{N} \frac{f^{(j)}(x)}{j!} C_n \left( (\cdot - x)^j \right)(x) + o \left( \left( \frac{1}{n} + \frac{1}{n^{\beta}} \right)^{N-\varepsilon} \right).
		\end{equation}
		\item When \(f^{(j)}(x) = 0\) for \(j = 1, \ldots, N\), we have:
		\begin{equation}
			\frac{1}{\left( \frac{1}{n} + \frac{1}{n^{\beta}} \right)^{N-\varepsilon}} \left[ C_n(f, x) - f(x) \right] \to 0 \quad \text{as} \quad n \to \infty, \quad 0 < \varepsilon \leq N.
		\end{equation}
	\end{enumerate}
\end{theorem}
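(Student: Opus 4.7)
The plan is to mirror the Voronovskaya-type argument already used for the basic operators $B_n$, keeping the same two-region decomposition but tracking the extra length scale $1/n$ introduced by the Kantorovich cell-averaging. The operator has the form
\begin{equation}
C_n(f,x) = \sum_{k=-\infty}^{\infty}\left(n\int_{k/n}^{(k+1)/n} f(t)\,dt\right)\Phi(nx-k),
\end{equation}
so each sample $f(k/n)$ is replaced by the local average of $f$ on $[k/n,(k+1)/n]$. I would first apply Taylor's theorem to $f(t)$ of order $N$ centred at $x$, with integral remainder $R_N(t)$, and substitute into the inner integral. Because the cell average of the monomial $(t-x)^j$ is itself a polynomial in $(k/n-x)$ and $1/n$, summing the polynomial part against $\Phi(nx-k)$ reproduces exactly $\sum_{j=1}^{N}\frac{f^{(j)}(x)}{j!}C_n((\cdot-x)^j)(x)$ once the $j=0$ term is absorbed into $f(x)$ via the partition-of-unity property that follows from the normalization of $M_{q,\lambda}$ proved earlier.

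Next I would estimate the remaining error
\begin{equation}
\widetilde{R} := \sum_{k=-\infty}^{\infty}\left(n\int_{k/n}^{(k+1)/n} R_N(t)\,dt\right)\Phi(nx-k)
\end{equation}
by splitting the sum according to whether $|k/n-x|<1/n^{\beta}$ or $|k/n-x|\geq 1/n^{\beta}$, exactly as in the previous theorem. In the near region every $t\in[k/n,(k+1)/n]$ satisfies $|t-x|\leq 1/n^{\beta}+1/n$, so the Taylor remainder obeys
\begin{equation}
|R_N(t)| \leq \frac{2\|f^{(N)}\|_\infty}{N!}\left(\frac{1}{n}+\frac{1}{n^{\beta}}\right)^{N},
\end{equation}
and summing against the non-negative weights $\Phi(nx-k)$ (which sum to at most $1$) gives a contribution of the same order. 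In the far region the exponential bound $\Phi(nx-k)\leq Ce^{-\gamma|nx-k|}$ dominates, and a geometric tail estimate of the type used in the $B_n$-proof shows that this contribution is negligible compared with $\left(1/n+1/n^{\beta}\right)^{N-\varepsilon}$ for every $0<\varepsilon\leq N$ and $0<\beta<1$. Combining the two regions yields $|\widetilde{R}|=o\bigl((1/n+1/n^{\beta})^{N-\varepsilon}\bigr)$, which is the stated error. Statement (2) then follows immediately: under the vanishing-derivative hypothesis the Taylor sum collapses and $C_n(f,x)-f(x)=\widetilde{R}$, so dividing by $(1/n+1/n^{\beta})^{N-\varepsilon}$ produces the claimed limit.

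The main obstacle is bookkeeping rather than a conceptual hurdle: one must check that the cell-averaging contributes only the $1/n$ correction asserted above, and that the far-region bound still dominates when the comparison scale is enlarged from $n^{-\beta}$ to $1/n+1/n^{\beta}$. A subtler point is the interchange of summation in $k$ with the inner integration in $t$, which is justified by Fubini once one notes that the exponential decay of $\Phi$ together with $f^{(N)}\in C_B(\mathbb{R})$ yields absolute summability of the double expression. With these details in place, the theorem follows along the same lines as the basic-operator case, with the single substitution $1/n^{\beta}\mapsto 1/n+1/n^{\beta}$ in the final error bound.
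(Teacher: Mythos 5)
Your proposal follows essentially the same route as the paper's proof: Taylor expansion of order $N$ about $x$ with integral remainder inside the cell average, the partition-of-unity property to absorb the $j=0$ term, the same two-region split at $|k/n-x|=n^{-\beta}$ with the crude bound $\frac{2\|f^{(N)}\|_\infty}{N!}(1/n+1/n^{\beta})^N$ in the near region and the exponential decay of $\Phi$ in the far region, and the observation that an $\mathcal{O}((1/n+1/n^{\beta})^N)$ bound is automatically $o((1/n+1/n^{\beta})^{N-\varepsilon})$. The only additions are cosmetic (the equivalent reparametrization of the cell integral and the explicit Fubini justification), so the argument is correct and matches the paper's.
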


\begin{proof}
	We start by expressing \(C_n(f, x)\) as:
	\begin{equation}
		C_n(f, x) = \sum_{k=-\infty}^{\infty} \left( n \int_{0}^{\frac{1}{n}} f \left( t + \frac{k}{n} \right) \, dt \right) \Phi(n x - k).
	\end{equation}
	
	Given \(f \in C^N(\mathbb{R})\) with \(f^{(N)} \in C_B(\mathbb{R})\), we can use the Taylor expansion of \(f\) around \(x\):
	\begin{equation}
		f \left( t + \dfrac{k}{n} \right) = \sum_{j=0}^{N} \dfrac{f^{(j)}(x)}{j!} \left( t + \dfrac{k}{n} - x \right)^j + \int_{x}^{t + \dfrac{k}{n}} \left( f^{(N)}(s) - f^{(N)}(x) \right) \dfrac{\left( t + \dfrac{k}{n} - s \right)^{N-1}}{(N-1)!} \, ds.
	\end{equation}
	
	Substituting this expansion into the expression for \(C_n(f, x)\), we get:
	\begin{equation}
		\begin{aligned}
			C_{n}(f, x) &= \sum_{j=0}^{N} \dfrac{f^{(j)}(x)}{j!} C_{n}\left((\cdot - x)^{j}\right)(x) + \\
			&\sum_{k=-\infty}^{\infty} \Phi(n x - k) \left( n \int_{0}^{\frac{1}{n}} \left(\int_{x}^{t + \frac{k}{n}} \left( f^{(N)}(s) - f^{(N)}(x) \right) \frac{\left( t + \frac{k}{n} - s \right)^{N-1}}{(N-1)!} \, ds \right) \, dt \right).
		\end{aligned}
	\end{equation}
	
	Define the remainder term \(R\) as:
	\begin{equation}
		R := \sum_{k=-\infty}^{\infty} \Phi(n x - k) \left( n \int_{0}^{\frac{1}{n}} \left(\int_{x}^{t + \frac{k}{n}} \left( f^{(N)}(s) - f^{(N)}(x) \right) \frac{\left( t + \frac{k}{n} - s \right)^{N-1}}{(N-1)!} \, ds \right) \, dt \right).
	\end{equation}
	
	We now analyze the magnitude of \(R\) in two cases:
	
	\vspace{5pt}
	
	\textbf{Case 1: \(\left| \frac{k}{n} - x \right| < \frac{1}{n^{\beta}}\)}
	
	\vspace{5pt}
	
	In this case, the distance between \(\frac{k}{n}\) and \(x\) is small. We can bound \(R\) as follows:
	\begin{equation}
		|R| \leq 2 \left\| f^{(N)} \right\|_{\infty} \dfrac{\left( \dfrac{1}{n} + \dfrac{1}{n^{\beta}} \right)^N}{N!}.
	\end{equation}

	\textbf{Case 2: \(\left| \frac{k}{n} - x \right| \geq \frac{1}{n^{\beta}}\)}
	
	\vspace{5pt}
	
%	\begin{proof}
	In this case, the distance \(\left| \frac{k}{n} - x \right|\) is larger, and we exploit the decay properties of the function \(\Phi\) to estimate \(R\). More precisely, we use the exponential behavior of the associated decay function, which allows us to impose an upper bound on \(|R|\):
	
	\begin{equation}
		|R| \leq \dfrac{2^N \left\| f^{(N)} \right\|_{\infty}}{n^N N!} \left[ \dfrac{T}{e^{2 \lambda n^{1-\beta}}} + \dfrac{\left( q + \dfrac{1}{q} \right)}{\lambda^N} 2 e^{2 \lambda} N! e^{-\lambda \left( n^{1-\beta} - 1 \right)} \right].
	\end{equation}
	
	To understand this bound, consider the asymptotic expansion of \(f(x)\) in a Taylor series around \(k/n\):
	
	\begin{equation}
		f(x) = \sum_{m=0}^{N-1} \frac{f^{(m)}(k/n)}{m!} (x - k/n)^m + R_N,
	\end{equation}
	
	where the remainder term \(R_N\) satisfies:
	
	\begin{equation}
		R_N = \frac{f^{(N)}(\xi)}{N!} (x - k/n)^N, \quad \text{for some } \xi \in (k/n, x).
	\end{equation}
	
	Given that \(|x - k/n| \geq 1/n^{\beta}\), we have:
	
	\begin{equation}
		|R_N| \leq \frac{\|f^{(N)}\|_{\infty}}{N!} \left( \frac{1}{n^{\beta}} \right)^N.
	\end{equation}
	
	Moreover, the decay properties of \(\Phi\) introduce an additional exponential suppression term, leading to the refined bound:
	
	\begin{equation}
		|R| \leq \dfrac{2^N \left\| f^{(N)} \right\|_{\infty}}{n^N N!} \left[ \dfrac{T}{e^{2 \lambda n^{1-\beta}}} + \dfrac{\left( q + \dfrac{1}{q} \right)}{\lambda^N} 2 e^{2 \lambda} N! e^{-\lambda \left( n^{1-\beta} - 1 \right)} \right].
	\end{equation}
	
	This ensures that the remainder term exhibits exponential decay in addition to polynomial suppression.
	
	\vspace{5pt}
	
	Now, combining the two cases discussed in the proof, we obtain the following uniform estimate for \(|R|\):
	
	\begin{equation}
		|R| \leq \dfrac{4 \left\| f^{(N)} \right\|_{\infty}}{N!} \left( \dfrac{1}{n} + \dfrac{1}{n^{\beta}} \right)^N.
	\end{equation}
	
	To obtain the final asymptotic order of the remainder term, we apply a refined estimate that considers an arbitrary parameter \(\varepsilon > 0\), ensuring that
	
	\begin{equation}
		|R| = o \left( \left( \dfrac{1}{n} + \dfrac{1}{n^{\beta}} \right)^{N-\varepsilon} \right),
	\end{equation}
which completes the proof of the theorem.
\end{proof}

\subsection{Convergence of Operators in Deep Learning}

\begin{theorem}[Convergence of Operators in Deep Learning]
	Let \( f \) be a continuous and bounded function in \(\mathbb{R}^d\), and \(\mathcal{N}\) a deep neural network with \(L\) layers, where each layer uses the activation function \( g_{q, \lambda}(x) \). If \(\lambda\) and \(q\) are chosen to optimize convergence, then the output of the network \(\mathcal{N}(f, x)\) approximates \(f(x)\) with an error bound of:
	\begin{equation}
		\|\mathcal{N}(f, x) - f(x)\| = \mathcal{O}\left(\frac{1}{L^{\beta(N-\varepsilon)}}\right).
	\end{equation}
\end{theorem}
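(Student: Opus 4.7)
The plan is to reduce the deep-network bound to the already-proved Approximation by Operators theorem (the basic $B_n$ theorem of Section~3.1) by interpreting the layer count $L$ as the discretization parameter. Concretely, I would argue that an $L$-layer network with activation $g_{q,\lambda}$, when its widths and pre-activation weights are tied to the symmetrized density $\Phi(x)=\tfrac12(M_{q,\lambda}(x)+M_{1/q,\lambda}(x))$, realizes an operator of the form
\begin{equation}
\mathcal{N}(f,x)=\sum_{k\in\mathbb{Z}^d} f\!\left(\tfrac{k}{L}\right)\prod_{i=1}^{d}\Phi(Lx_i-k_i),
\end{equation}
which is exactly a tensorized $B_L$ evaluated at resolution $n=L$. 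For $d=1$ this reduces literally to the hypothesis of Section~3.1; for $d>1$ the tensor-product construction is standard and the exponential decay estimate $\Phi(Lx-k)\le Ce^{-\gamma|Lx-k|}$ survives coordinatewise.

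The key steps, in order, would be: (i) fix $q,\lambda>0$ and choose $\lambda$ to grow sub-logarithmically with $L$ so that the inner exponential suppression $e^{-\lambda(L^{1-\beta}-1)}$ dominates any polynomial-in-$L$ prefactor introduced by depth; (ii) verify that the composition of $L$ smooth, bounded layers with $g_{q,\lambda}$ keeps the effective target inside $C^N(\mathbb{R}^d)$ with $N$-th derivative in $C_B$, so that the hypotheses of the basic operator theorem remain satisfied; (iii) invoke the Taylor expansion of $f$ around $x$ as in Section~3.1, substitute into the network representation, and split the resulting remainder $R$ into the two cases $|k/L-x|<L^{-\beta}$ and $|k/L-x|\ge L^{-\beta}$; (iv) in the near-diagonal case bound $R$ by $2\|f^{(N)}\|_\infty/(N!\,L^{\beta N})$, and in the far-diagonal case use the exponential decay of $\Phi$ to absorb the sum $\sum_k$; (v) combine the two estimates and insert the arbitrary slack $\varepsilon>0$ to obtain $|R|=o(L^{-\beta(N-\varepsilon)})$, which matches the stated rate.

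The hard part, and the place where the argument requires genuine care, is step (ii): guaranteeing that the multilayer composition does not erode the regularity assumed by the univariate theorem and, simultaneously, that the $L$ individual layer errors do not compound multiplicatively into a bound like $L\cdot L^{-\beta(N-\varepsilon)}$. I would handle this by observing that only the output layer contributes the approximation error of interest, while the inner layers merely realize the density coefficients $\Phi(Lx_i-k_i)$ exactly in the limit through the identity $M_{q,\lambda}(x)=\tfrac14[g_{q,\lambda}(x+1)-g_{q,\lambda}(x-1)]$ from Section~2; hence composition of layers amounts to refining the representation of $\Phi$, not stacking independent approximation errors. A secondary obstacle is the optimization of $q$ and $\lambda$: one must show that the constants $(q+1/q)$ and $e^{2\lambda}$ appearing in the Section~3.1 remainder bound can be balanced against $\lambda^{-N}$ so that the $o(\cdot)$ conclusion is uniform in $L$.

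Once these points are settled, the desired bound $\|\mathcal{N}(f,x)-f(x)\|=\mathcal{O}(L^{-\beta(N-\varepsilon)})$ follows by assembling the per-coordinate estimates of step (iv) with the standard multivariate Taylor identity, and concluding with the $\varepsilon$-slack as in the basic operator proof.
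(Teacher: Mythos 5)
Your proposal takes a fundamentally different route from the paper, and it contains a genuine gap at its central step. The paper does not reduce the deep network to $B_L$ at all: it performs a layer-wise error-propagation argument, defining $e^{(l)} = \mathbf{z}^{(l)} - \mathbf{z}^{(l-1)}$, bounding $\|e^{(l)}\| \le K\|\mathbf{W}^{(l)}e^{(l-1)}\|$ via the Lipschitz constant of $g_{q,\lambda}$, \emph{assuming} a per-layer decay $\|e^{(l)}\| \le C\,l^{-\beta(N-\varepsilon)}$, and summing over $l=1,\dots,L$. Your approach instead identifies the depth $L$ with the discretization parameter $n$ of the basic operator theorem and claims the $L$-layer network realizes $\mathcal{N}(f,x)=\sum_k f(k/L)\prod_i\Phi(Lx_i-k_i)$. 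That identification is precisely the content of the theorem, and it is not justified. In the operator $B_n$, the parameter $n$ indexes the spacing of the node grid $k/n$ in a \emph{single} hidden layer; increasing depth is function composition, which does nothing to refine that grid. Your resolution of this --- that ``inner layers merely realize the density coefficients $\Phi(Lx_i-k_i)$ exactly'' and that ``composition of layers amounts to refining the representation of $\Phi$'' --- does not hold up: the identity $M_{q,\lambda}(x)=\tfrac14[g_{q,\lambda}(x+1)-g_{q,\lambda}(x-1)]$ already realizes $\Phi$ exactly with \emph{one} layer of activations, so the remaining $L-1$ layers contribute nothing toward resolution, and composing $g_{q,\lambda}$ with itself does not produce finer translates of $\Phi$. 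Without a construction showing how depth buys sampling density (e.g., a genuine depth-to-width tradeoff argument), the bound $\mathcal{O}(L^{-\beta(N-\varepsilon)})$ does not follow from the $B_n$ theorem.

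A secondary mismatch: the theorem hypothesizes only that $f$ is continuous and bounded on $\mathbb{R}^d$, whereas your reduction requires $f\in C^N$ with $f^{(N)}\in C_B$ to invoke Section~3.1 (the paper's own proof silently makes the same stronger assumption, so this is a defect you inherit rather than introduce). For completeness, note that the paper's argument has its own difficulties --- the per-layer decay rate is asserted rather than derived, and a convergent sum $\sum_{l=1}^{L} l^{-\beta(N-\varepsilon)}$ yields an $\mathcal{O}(1)$ bound, not an $\mathcal{O}(L^{-\beta(N-\varepsilon)})$ one --- but your proposal does not repair these issues; it replaces them with a different unproven claim.
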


\begin{proof}
	Consider a deep neural network \(\mathcal{N}\) with \(L\) layers. Each layer \(l\) applies a transformation followed by the activation function \(g_{q, \lambda}(x)\). The output of layer \(l\) can be expressed as:
	\begin{equation}
		\mathbf{z}^{(l)} = g_{q, \lambda}(\mathbf{W}^{(l)} \mathbf{z}^{(l-1)} + \mathbf{b}^{(l)}),
	\end{equation}
	where \(\mathbf{W}^{(l)}\) and \(\mathbf{b}^{(l)}\) are the weights and biases of layer \(l\), respectively.
	
	To analyze the error propagation through the layers, we use the Taylor expansion for the activation function \(g_{q, \lambda}(x)\) around a point \(x_0\):
	\begin{equation}
		g_{q, \lambda}(x) = g_{q, \lambda}(x_0) + g_{q, \lambda}'(x_0)(x - x_0) + \frac{g_{q, \lambda}''(\xi)}{2}(x - x_0)^2,
	\end{equation}
	where \(\xi\) is between \(x\) and \(x_0\).
	
	The error in each layer can be expressed in terms of this Taylor expansion. Let's denote the error at layer \(l\) as \(e^{(l)}\). We have:
	\begin{equation}
		e^{(l)} = \mathbf{z}^{(l)} - \mathbf{z}^{(l-1)} = g_{q, \lambda}(\mathbf{W}^{(l)} \mathbf{z}^{(l-1)} + \mathbf{b}^{(l)}) - g_{q, \lambda}(\mathbf{W}^{(l)} \mathbf{z}^{(l-2)} + \mathbf{b}^{(l)}).
	\end{equation}
	
	Using the Lipschitz property of the activation function \(g_{q, \lambda}\), we can bound the error propagation:
	\begin{equation}
		\|e^{(l)}\| \leq K \|\mathbf{W}^{(l)} e^{(l-1)}\|,
	\end{equation}
	where \(K\) is the Lipschitz constant of \(g_{q, \lambda}\).
	
	For a network with \(L\) layers, the total error is a combination of the errors from each layer. We can express this as:
	\begin{equation}
		\|\mathcal{N}(f, x) - f(x)\| \leq \sum_{l=1}^{L} \|e^{(l)}\|.
	\end{equation}
	
	Given that each layer's error decreases as \( \frac{1}{l^{\beta(N-\varepsilon)}} \), we have:
	\begin{equation}
		\|e^{(l)}\| \leq \frac{C}{l^{\beta(N-\varepsilon)}},
	\end{equation}
	where \(C\) is a constant that depends on the network parameters.
	
	Therefore, the total error is bounded by:
	\begin{equation}
		\|\mathcal{N}(f, x) - f(x)\| \leq C \sum_{l=1}^{L} \frac{1}{l^{\beta(N-\varepsilon)}}.
	\end{equation}
	
	As the number of layers \(L\) increases, the sum \(\sum_{l=1}^{L} \frac{1}{l^{\beta(N-\varepsilon)}}\) converges to a finite constant. Therefore, the error decreases according to the rate:
	\begin{equation}
		\|\mathcal{N}(f, x) - f(x)\| = \mathcal{O}\left(\frac{1}{L^{\beta(N-\varepsilon)}}\right),
	\end{equation}
	proving the theorem.
\end{proof}

\section{Fractional Perturbation Stability}

\begin{theorem}[Stability Under Fractional Perturbations]
	Let \(0 < \beta < 1\), \(n \in \mathbb{N}\) sufficiently large, \(x \in \mathbb{R}\), \(f \in C^N(\mathbb{R})\), and \(f^{(N)} \in C_B(\mathbb{R})\). Let \(g_{q, \lambda}(x)\) be the perturbed hyperbolic tangent activation function defined by:
	\begin{equation}
		g_{q, \lambda}(x) = \frac{e^{\lambda x} - q e^{-\lambda x}}{e^{\lambda x} + q e^{-\lambda x}}, \quad \lambda, q > 0, \quad x \in \mathbb{R}.
	\end{equation}
	For any small perturbation \(|q - 1| < \delta\), the operator \(C_n\) satisfies the stability estimate:
	\begin{equation}
		\| C_n(f, x; q) - C_n(f, x; 1) \| \leq \frac{\delta}{n^{\beta(N-\varepsilon)}} \| f^{(N)} \|_{\infty},
	\end{equation}
	where \(0 < \varepsilon \leq N\) and \(\| f^{(N)} \|_{\infty} = \sup_{x \in \mathbb{R}} |f^{(N)}(x)|\).
\end{theorem}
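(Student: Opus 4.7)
The plan is to exploit two independent smoothness inputs: the smooth dependence of the kernel $\Phi_{q,\lambda}$ on the parameter $q$ (which will produce the factor $\delta$) and the regularity of $f$ through $f^{(N)}$ combined with the Kantorovich approximation machinery of the previous subsection (which will produce the rate $n^{-\beta(N-\varepsilon)}$). My starting point is the kernel-level identity
\begin{equation}
C_n(f,x;q) - C_n(f,x;1) = \sum_{k=-\infty}^{\infty} \left( n \int_{0}^{1/n} f\!\left(t+\tfrac{k}{n}\right) dt \right) \bigl[\Phi_q(nx-k) - \Phi_1(nx-k)\bigr],
\end{equation}
so the whole task reduces to controlling the perturbed kernel $\Phi_q - \Phi_1$ and then coupling that control to a Taylor expansion of $f$ around $x$.

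First I would show that $q \mapsto g_{q,\lambda}(x)$ is $C^1$ on a neighbourhood of $q=1$ and that its $q$-derivative inherits the same exponential decay in $x$ as $g_{q,\lambda}$ itself. By applying the mean value theorem on the segment between $q$ and $1$ and then the defining difference $M_{q,\lambda}(x) = \tfrac{1}{4}[g_{q,\lambda}(x+1) - g_{q,\lambda}(x-1)]$, I obtain an estimate of the form $|\Phi_q(nx-k) - \Phi_1(nx-k)| \leq |q-1|\, \tilde{\Phi}(nx-k)$, where $\tilde{\Phi}$ is a nonnegative function with the same $e^{-\gamma|\cdot|}$ decay used in the proof of the Kantorovich theorem. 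Crucially, since both $\Phi_q$ and $\Phi_1$ enjoy the normalization $\sum_k \Phi_q(nx-k) = 1$ proved in the foundational lemma, the difference kernel has vanishing zeroth discrete moment: $\sum_k [\Phi_q(nx-k) - \Phi_1(nx-k)] = 0$. This moment cancellation is what will allow me to kill the $f(x)$ contribution and expose $f^{(N)}$.

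With these two ingredients in hand, I would plug the same order-$N$ Taylor expansion of $f(t+k/n)$ around $x$ used in the Kantorovich proof into the kernel-level identity. The polynomial terms $\tfrac{f^{(j)}(x)}{j!}(t+k/n-x)^j$ for $j=0,\dots,N-1$ either vanish on integration against $\Phi_q - \Phi_1$ (the $j=0$ case, by moment cancellation) or produce small factors that I can control by repeating the two-case split $|k/n - x| < 1/n^\beta$ versus $|k/n - x| \geq 1/n^\beta$, now applied to $\tilde{\Phi}$ instead of $\Phi$. In the near regime the factors $(t+k/n-x)^j$ give a polynomial gain of order $n^{-\beta j}$, while in the far regime the exponential tail of $\tilde{\Phi}$ overwhelms every polynomial. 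The remainder involving $f^{(N)}(s) - f^{(N)}(x)$ then carries the sup norm $\|f^{(N)}\|_\infty$, and exactly the same bookkeeping that yielded $o\bigl((1/n + 1/n^\beta)^{N-\varepsilon}\bigr) \subseteq \mathcal{O}(n^{-\beta(N-\varepsilon)})$ in the Kantorovich theorem applies verbatim, now multiplied by the extra factor $|q-1| \leq \delta$ supplied by the kernel estimate.

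The hardest part will not be the Taylor or the case split, which are essentially inherited, but making the parameter-derivative step fully rigorous: I must show that the constants appearing in the bound on $\partial_q g_{q,\lambda}$, the decay rate of $\tilde{\Phi}$, and the normalization $\sum_k \tilde{\Phi}(nx-k)$ can all be chosen uniformly in $q$ on the interval $(1-\delta, 1+\delta)$, so that the final constant absorbed into $\delta$ is genuinely independent of $n$ and $x$. Once that uniformity is secured, assembling the estimate yields $\|C_n(f,x;q) - C_n(f,x;1)\| \leq \delta\, n^{-\beta(N-\varepsilon)} \|f^{(N)}\|_\infty$ as stated, closing the argument.
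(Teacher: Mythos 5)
Your linearization in $q$ is the same first move as the paper's proof (the paper Taylor-expands $\Phi_{q,\lambda}$ around $q=1$ and assumes $\partial_q\Phi_{q,\lambda}$ is bounded), but you go considerably further: the paper stops after the kernel linearization and simply asserts the final bound $\delta n^{-\beta(N-\varepsilon)}\|f^{(N)}\|_\infty$ without ever explaining where the factor $n^{-\beta(N-\varepsilon)}$ or the norm $\|f^{(N)}\|_\infty$ (rather than $\|f\|_\infty$) comes from. Your idea of exploiting the vanishing zeroth moment of the difference kernel $\Phi_q-\Phi_1$ and then running the Taylor-plus-two-case machinery is exactly the kind of argument that would be needed to justify those factors, so your route is genuinely more substantive than the paper's.

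However, there is a concrete gap at the intermediate Taylor terms. Moment cancellation $\sum_k[\Phi_q(nx-k)-\Phi_1(nx-k)]=0$ disposes only of $j=0$. For $j=1,\dots,N-1$ you are left with
\begin{equation*}
\frac{f^{(j)}(x)}{j!}\sum_{k}\Bigl(n\int_0^{1/n}\bigl(t+\tfrac{k}{n}-x\bigr)^j\,dt\Bigr)\bigl[\Phi_q(nx-k)-\Phi_1(nx-k)\bigr],
\end{equation*}
and your own estimate $|\Phi_q-\Phi_1|\le|q-1|\,\tilde\Phi$ combined with the near/far split gives only $\mathcal{O}(\delta\,n^{-\beta j})$ for each such term, with a constant proportional to $|f^{(j)}(x)|$. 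The slowest of these, $j=1$, decays like $\delta n^{-\beta}$, which is far larger than the claimed $\delta n^{-\beta(N-\varepsilon)}$ whenever $N-\varepsilon>1$, and it carries $|f'(x)|$ rather than $\|f^{(N)}\|_\infty$. To close this you would need either the additional hypothesis $f^{(j)}(x)=0$ for $j=1,\dots,N-1$, or a proof that the difference kernel has vanishing discrete moments up to order $N-1$ (i.e.\ $\sum_k(nx-k)^j[\Phi_q-\Phi_1](nx-k)=0$ for those $j$), which does not follow from anything established in the paper and is unlikely to hold exactly for this family of densities. As written, your argument proves a stability bound of order $\delta n^{-\beta}$ depending on the lower-order derivatives of $f$ at $x$, not the stated estimate; the "small factors I can control" step is where the claim silently overreaches.
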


\begin{proof}
	To begin, consider \(\Phi_{q, \lambda}(z)\) as the density function derived from the perturbed activation function \(g_{q, \lambda}(x)\). The operator \(C_n\) is defined as:
	\begin{equation}
		C_n(f, x; q) = \sum_{k=-\infty}^\infty \left( n \int_0^{\frac{1}{n}} f\left(t + \frac{k}{n}\right) \, dt \right) \Phi_{q, \lambda}(nx - k).
	\end{equation}
	
	Next, we expand \(\Phi_{q, \lambda}(z)\) around \(q = 1\) using the first-order Taylor expansion:
	\begin{equation}
		\Phi_{q, \lambda}(z) = \Phi_{1, \lambda}(z) + \left( \frac{\partial \Phi_{q, \lambda}}{\partial q} \bigg|_{q=1} \right)(q - 1) + \mathcal{O}((q - 1)^2).
	\end{equation}
	
	Thus, the perturbed operator can be written as:
	\begin{equation}
		C_n(f, x; q) = \sum_{k=-\infty}^\infty \left( n \int_0^{\frac{1}{n}} f\left(t + \frac{k}{n}\right) \, dt \right) \left[ \Phi_{1, \lambda}(nx - k) + \left( \frac{\partial \Phi_{q, \lambda}}{\partial q} \bigg|_{q=1} \right)(q - 1) + \mathcal{O}((q - 1)^2) \right].
	\end{equation}
	
	The difference between \(C_n(f, x; q)\) and \(C_n(f, x; 1)\) is given by:
	\begin{equation}
		C_n(f, x; q) - C_n(f, x; 1) = \sum_{k=-\infty}^\infty \left( n \int_0^{\frac{1}{n}} f\left(t + \frac{k}{n}\right) \, dt \right) \left[ \left( \frac{\partial \Phi_{q, \lambda}}{\partial q} \bigg|_{q=1} \right)(q - 1) + \mathcal{O}((q - 1)^2) \right].
	\end{equation}
	
	Let us focus on the first-order perturbation term. The remainder term involving \(\mathcal{O}((q - 1)^2)\) contributes at a higher order in \((q - 1)\), which is negligible for small \(|q - 1|\). Therefore, we estimate the perturbation as:
	\begin{equation}
		\| C_n(f, x; q) - C_n(f, x; 1) \| \leq \sum_{k=-\infty}^\infty \left( n \int_0^{\frac{1}{n}} f\left(t + \frac{k}{n}\right) \, dt \right) \left| \frac{\partial \Phi_{q, \lambda}}{\partial q} \bigg|_{q=1} \right| (q - 1).
	\end{equation}
	
	Assuming \(\left| \frac{\partial \Phi_{q, \lambda}}{\partial q} \right|\) is bounded, we have:
	\begin{equation}
		\| C_n(f, x; q) - C_n(f, x; 1) \| \leq \frac{\delta}{n^{\beta(N-\varepsilon)}} \| f^{(N)} \|_{\infty},
	\end{equation}
	where \(\| f^{(N)} \|_{\infty} = \sup_{x \in \mathbb{R}} |f^{(N)}(x)|\) represents the supremum norm of the \(N\)-th derivative of \(f\).
	
	Thus, we have established the desired stability estimate.
\end{proof}

\section{Generalized Voronovskaya Expansions}

\begin{theorem}[Generalized Voronovskaya Expansion]
	Let \(\alpha > 0\), \(N = \lceil \alpha \rceil\), \(\alpha \notin \mathbb{N}\), \(f \in AC^N(\mathbb{R})\) with \(f^{(N)} \in L_\infty(\mathbb{R})\), \(0 < \beta < 1\), \(x \in \mathbb{R}\), and \(n \in \mathbb{N}\) sufficiently large. Assume that \(\| D_{*x}^\alpha f \|_{\infty, [x, \infty)}\) and \(\| D_{x-}^\alpha f \|_{\infty, (-\infty, x]}\) are finite. Then:
	\begin{equation}
		B_n(f, x) - f(x) = \sum_{j=1}^{N} \frac{f^{(j)}(x)}{j!} B_n((\cdot - x)^j)(x) + o\left(\frac{1}{n^{\beta(N-\varepsilon)}}\right).
	\end{equation}
	When \(f^{(j)}(x) = 0\) for \(j = 1, \dots, N\):
	\begin{equation}
		n^{\beta(N-\varepsilon)} \left[ B_n(f, x) - f(x) \right] \to 0 \quad \text{as} \quad n \to \infty.
	\end{equation}
\end{theorem}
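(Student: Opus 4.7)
The plan is to follow the structural blueprint of the Basic Operators theorem in Section~3.1, but replace the classical Taylor remainder by a fractional Taylor remainder so that the bound is driven by the finite fractional norms \(\|D_{*x}^\alpha f\|_{\infty,[x,\infty)}\) and \(\|D_{x-}^\alpha f\|_{\infty,(-\infty,x]}\) rather than by \(\|f^{(N)}\|_\infty\). Since \(\alpha \notin \mathbb{N}\) and \(N = \lceil \alpha \rceil\), we have \(\alpha \in (N-1,N)\), and the slack \(N-\alpha \in (0,1)\) is precisely what is absorbed into the free parameter \(\varepsilon\).

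First I would expand \(f(k/n)\) around \(x\) via the Caputo fractional Taylor formula: the left-sided expansion with \(D_{*x}^\alpha f\) when \(k/n \geq x\), and the right-sided expansion with \(D_{x-}^\alpha f\) when \(k/n < x\). The hypothesis \(f \in AC^N(\mathbb{R})\) with \(f^{(N)} \in L_\infty(\mathbb{R})\) is exactly what the formula requires. After adding and subtracting \(\frac{f^{(N)}(x)}{N!}(k/n-x)^N\), the expansion is aligned with the polynomial part claimed in the statement, and the leftover fractional piece \(R_\alpha(k/n)\) is a Riemann--Liouville-type integral of \(D_{*x}^\alpha f - D_{*x}^\alpha f(x)\) (respectively of \(D_{x-}^\alpha f - D_{x-}^\alpha f(x)\)) against the kernel \((y-t)^{\alpha-1}\).

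Substituting into \(B_n(f,x) = \sum_k f(k/n)\,\Phi(nx-k)\) and using the normalisation from Lemma~2.1, the polynomial terms reconstruct \(f(x) + \sum_{j=1}^{N} \frac{f^{(j)}(x)}{j!} B_n((\cdot-x)^j)(x)\); the task then reduces to bounding the global remainder \(R = \sum_k R_\alpha(k/n)\,\Phi(nx-k)\). Following the two-case split of the Basic Operators proof, I would partition the indices according to whether \(|k/n - x| < n^{-\beta}\) or \(|k/n - x| \geq n^{-\beta}\). In the close region, a direct bound on the fractional integral yields a pointwise estimate of order \(n^{-\beta\alpha}/\Gamma(\alpha+1)\) times \(\|D_{*x}^\alpha f\|_\infty + \|D_{x-}^\alpha f\|_\infty\), and summing against \(\Phi\) preserves this rate via the normalisation property. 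In the far region, the exponential bound \(\Phi(nx-k) \leq Ce^{-\gamma|nx-k|}\) dominates the polynomial growth of \(R_\alpha\) and contributes at superpolynomial order \(e^{-\lambda(n^{1-\beta}-1)}\), exactly as in the integer case. Combining, \(|R| = \mathcal{O}(n^{-\beta\alpha})\), which is \(o(n^{-\beta(N-\varepsilon)})\) for every \(\varepsilon > N-\alpha\), and hence for all admissible \(\varepsilon \in (0,N]\) after absorbing the gap into the \(o\)-notation. Part~(2) is then immediate: vanishing of \(f^{(j)}(x)\) for \(j=1,\dots,N\) kills the polynomial sum and leaves only \(R\), whose estimate already delivers the claimed limit.

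The main obstacle I anticipate is the two-sided bookkeeping: the left Caputo derivative \(D_{*x}^\alpha\) governs indices \(k\) with \(k/n \geq x\) and the right Caputo derivative \(D_{x-}^\alpha\) those with \(k/n < x\), so the single-sided estimate from Section~3.1 must be symmetrised and made uniform in \(k \in \mathbb{Z}\). A secondary technical point is justifying the interchange of the sum over \(k\) with the fractional integral defining \(R_\alpha\); absolute convergence is supplied by the \(L_\infty\) assumption on the two fractional derivatives together with the exponential decay of \(\Phi\), which together make Fubini applicable.
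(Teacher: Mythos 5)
Your proposal follows essentially the same route as the paper's own proof: a Caputo fractional Taylor expansion of \(f(k/n)\) about \(x\), substitution into \(B_n(f,x)=\sum_k f(k/n)\Phi(nx-k)\), and a near/far split of the remainder at \(|k/n-x|=n^{-\beta}\), with the fractional norm \(\|D_{*x}^\alpha f\|_\infty\) controlling the near region and the exponential decay of \(\Phi\) the far region, giving a remainder of order \(n^{-\beta\alpha}\). You are in fact somewhat more careful than the paper on the two-sided bookkeeping (\(D_{*x}^\alpha\) for \(k/n\ge x\) versus \(D_{x-}^\alpha\) for \(k/n<x\)), on inserting the \(j=N\) polynomial term to match the stated expansion, and on the Fubini justification; you also share with the paper the same residual imprecision that \(\mathcal{O}(n^{-\beta\alpha})\) is only \(o(n^{-\beta(N-\varepsilon)})\) when \(\varepsilon> N-\alpha\), a gap that cannot simply be ``absorbed'' for smaller \(\varepsilon\).
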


\begin{proof}
	Using the Caputo fractional Taylor expansion for \(f\):
	\begin{equation}
		f\left(\frac{k}{n}\right) = \sum_{j=0}^{N-1} \frac{f^{(j)}(x)}{j!}\left(\frac{k}{n} - x\right)^j + \frac{1}{\Gamma(\alpha)} \int_x^{\frac{k}{n}} \left(\frac{k}{n} - t\right)^{\alpha-1}\left(D_{*x}^\alpha f(t) - D_{*x}^\alpha f(x)\right) dt.
	\end{equation}
	
	Substitute this expansion into the definition of the operator \(B_n\):
	\begin{equation}
		B_n(f, x) = \sum_{k=-\infty}^\infty f\left(\frac{k}{n}\right) \Phi(nx - k),
	\end{equation}
	where \(\Phi(x)\) is a density kernel function. Substituting \(f\left(\frac{k}{n}\right)\), we separate the terms into two contributions:
	
	\paragraph{Main Contribution:} The first \(N\) terms of the Taylor expansion yield:
	\begin{equation}
		\sum_{j=1}^{N} \frac{f^{(j)}(x)}{j!} B_n((\cdot - x)^j)(x),
	\end{equation}
	which captures the local behavior of \(f\) in terms of its derivatives up to order \(N\).
	
	\paragraph{Error Term:} The remainder term involves the fractional derivative \(D_{*x}^\alpha\) and can be bounded as:
	\begin{equation}
		R = \sum_{k=-\infty}^\infty \Phi(nx - k) \frac{1}{\Gamma(\alpha)} \int_x^{\frac{k}{n}} \left(\frac{k}{n} - t\right)^{\alpha-1}\left(D_{*x}^\alpha f(t) - D_{*x}^\alpha f(x)\right) dt.
	\end{equation}
	
	\paragraph{Bounding the Remainder:}
	\begin{itemize}
		\item For \(|k/n - x| < 1/n^\beta\): The kernel \(\Phi(nx - k)\) has significant support, and the fractional regularity of \(f\) ensures:
		\begin{equation}
			|R| \leq \frac{\| D_{*x}^\alpha f \|_{\infty}}{n^{\alpha \beta}}.
		\end{equation}
		
		\item For \(|k/n - x| \geq 1/n^\beta\): The exponential decay of \(\Phi(nx - k)\) ensures that contributions from distant terms are negligible:
		\begin{equation}
			|R| \leq \frac{\| D_{*x}^\alpha f \|_{\infty}}{n^{\alpha \beta}}.
		\end{equation}
	\end{itemize}
	
	Combining both cases, the error term satisfies:
	\begin{equation}
		|R| = o\left(\frac{1}{n^{\beta(N-\varepsilon)}}\right).
	\end{equation}
	
	\paragraph{Conclusion:} Substituting the bounds for the main contribution and error term into the expansion for \(B_n(f, x)\), we conclude:
	\begin{equation}
		B_n(f, x) - f(x) = \sum_{j=1}^{N} \frac{f^{(j)}(x)}{j!} B_n((\cdot - x)^j)(x) + o\left(\frac{1}{n^{\beta(N-\varepsilon)}}\right).
	\end{equation}
	Moreover, when \(f^{(j)}(x) = 0\) for \(j = 1, \dots, N\):
	\begin{equation}
		n^{\beta(N-\varepsilon)} \left[ B_n(f, x) - f(x) \right] \to 0 \quad \text{as} \quad n \to \infty.
	\end{equation}
	This completes the proof.
\end{proof}

\section{Symmetrized Density Approach to Kantorovich Operator Convergence in Infinite Domains}

\begin{theorem}[Convergence Under Generalized Density]
	Let \(0 < \beta < 1\), \(n \in \mathbb{N}\) sufficiently large, \(x \in \mathbb{R}\), and \(f \in C^N(\mathbb{R})\) with \(f^{(N)} \in C_B(\mathbb{R})\). Let \(\Phi(x)\) be a symmetrized density function defined by:
	\begin{equation}
		\Phi(x) = \frac{M_{q, \lambda}(x) + M_{1/q, \lambda}(x)}{2}, \quad M_{q, \lambda}(x) = \frac{1}{4}\left(g_{q, \lambda}(x+1) - g_{q, \lambda}(x-1)\right),
	\end{equation}
	where \(g_{q, \lambda}\) satisfies \(|g_{q, \lambda}(x)| \leq C e^{-\gamma |x|}\) for constants \(C, \gamma > 0\). Then the Kantorovich operator \(C_n\) satisfies:
	\begin{equation}
		C_n(f, x) - f(x) = \sum_{j=1}^{N-1} \frac{f^{(j)}(x)}{j!} C_n\left((\cdot - x)^j\right)(x) + \mathcal{O}\left(\left(n^{-\beta}\right)^N\right).
	\end{equation}
	Moreover, for any \(\varepsilon > 0\), the remainder can be refined to:
	\begin{equation}
		C_n(f, x) - f(x) = \sum_{j=1}^{N} \frac{f^{(j)}(x)}{j!} C_n\left((\cdot - x)^j\right)(x) + o\left(n^{-(N - \varepsilon)}\right).
	\end{equation}
\end{theorem}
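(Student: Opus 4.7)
The plan is to follow the same two-step template used for the Kantorovich theorem in Section 3.2, adapted to the symmetrized density \(\Phi=\tfrac{1}{2}(M_{q,\lambda}+M_{1/q,\lambda})\). First, I would apply Taylor's theorem with integral remainder to \(f\) around \(x\), evaluated at \(t+k/n\), and substitute into the definition \(C_n(f,x)=\sum_{k}\bigl(n\int_{0}^{1/n}f(t+k/n)\,dt\bigr)\Phi(nx-k)\). By linearity, the polynomial part of the Taylor expansion produces exactly the main sum \(\sum_{j=1}^{N}\tfrac{f^{(j)}(x)}{j!}C_n((\cdot-x)^j)(x)\), and everything reduces to estimating the lumped remainder \(R_n=\sum_k\Phi(nx-k)\,n\int_0^{1/n}\rho_{k,n}(t)\,dt\), where \(\rho_{k,n}\) is the standard Peano remainder controlled by the modulus of continuity of \(f^{(N)}\).

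Next, I would split the sum over \(k\) into near and far regimes according to whether \(|k/n-x|<n^{-\beta}\). On the near set, the remainder can be bounded pointwise by \(2\|f^{(N)}\|_{\infty}(1/n+1/n^{\beta})^{N}/N!\), and the normalization of \(\Phi\) inherited from the Lemma in Section 2 by symmetrization gives a near-field contribution of order \((n^{-\beta})^{N}\). On the far set, the hypothesis \(|g_{q,\lambda}(x)|\le Ce^{-\gamma|x|}\) propagates to \(M_{q,\lambda}\), \(M_{1/q,\lambda}\), and hence to \(\Phi\), yielding an exponential tail of the form \(e^{-\gamma(n^{1-\beta}-1)}\) times a polynomial factor; this is \(o(n^{-M})\) for every \(M>0\) and therefore negligible against any polynomial rate. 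Combining the two estimates yields the announced \(\mathcal{O}((n^{-\beta})^{N})\) remainder that appears in the first conclusion of the theorem.

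For the refinement to \(o(n^{-(N-\varepsilon)})\), I would exploit uniform continuity of \(f^{(N)}\) on compact neighborhoods of \(x\) to upgrade the pointwise bound \(|\rho_{k,n}|=O((1/n+1/n^{\beta})^N)\) to \(o((1/n+1/n^{\beta})^N)\) through a modulus-of-continuity argument, then absorb the gap between \(\beta N\) and \(N-\varepsilon\) into the \(\varepsilon\)-slack; since the far-field piece decays faster than any polynomial, it does not affect the rate. The symmetrization of \(\Phi\) is used here to ensure that the leading odd-order moments \(C_n((\cdot-x)^j)(x)\) for odd \(j\) cancel uniformly in \(q\), which is what permits extending the main sum up to the \(N\)-th order while still maintaining the same error exponent.

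The main obstacle is the careful bookkeeping around symmetrization: I need to verify that averaging \(M_{q,\lambda}\) and \(M_{1/q,\lambda}\) preserves both the normalization from the Lemma in Section 2 and the exponential decay with constants uniform in \(q\), and that the resulting cancellation of odd moments is genuine rather than hidden in the constants. Once these structural facts are in hand, the near/far split is routine and follows the template already executed in Sections 3.1 and 3.2; the remaining delicate point is the interpolation producing the \(\varepsilon\)-refinement, which essentially trades a small power of \(n\) against the modulus of continuity of \(f^{(N)}\) on a shrinking neighborhood of \(x\).
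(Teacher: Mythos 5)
Your overall template --- Taylor expansion with integral remainder, substitution into \(C_n\), normalization \(\sum_k \Phi(nx-k)=1\) to recover \(f(x)\), and a near/far split at \(|k/n - x| < n^{-\beta}\) with the far field killed by the exponential decay of \(\Phi\) --- is exactly the route the paper takes, and up to the estimate \(\mathcal{R}_N = \mathcal{O}(n^{-\beta N}) + \mathcal{O}(e^{-\gamma n^{1-\beta}})\) your argument and the paper's coincide. The first conclusion of the theorem, with remainder \(\mathcal{O}((n^{-\beta})^N)\), is therefore reached in the same way.

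The genuine gap is in your passage to the refined rate \(o(n^{-(N-\varepsilon)})\). Your near-field contribution is of order \(n^{-\beta N}\), and for \(0<\beta<1\) one has \(n^{-\beta N} \gg n^{-(N-\varepsilon)}\) whenever \(\varepsilon < N(1-\beta)\); so the claimed refinement fails for small \(\varepsilon\) unless the near-field bound itself is improved from exponent \(\beta N\) all the way to exponent \(N\). Your proposed mechanism --- upgrading the pointwise remainder from \(\mathcal{O}((1/n+1/n^{\beta})^N)\) to \(o((1/n+1/n^{\beta})^N)\) via the modulus of continuity of \(f^{(N)}\) --- only yields \(o(n^{-\beta N})\), a little-\(o\) gain at the \emph{same} polynomial exponent; it cannot ``absorb the gap between \(\beta N\) and \(N-\varepsilon\) into the \(\varepsilon\)-slack.'' (To be fair, the paper's own proof asserts the identical jump \(\mathcal{O}(n^{-\beta N}) = o(n^{-(N-\varepsilon)})\) for all \(\varepsilon>0\) without justification, and defers the inclusion of the \(j=N\) term to unproved ``moment estimates,'' so you have not overlooked an argument the paper actually supplies.) A secondary weak point: the claimed exact cancellation of odd moments \(C_n((\cdot-x)^j)(x)\) from the symmetrization \(q \mapsto 1/q\) does not hold for general \(x\); although \(\Phi\) is an even function, the lattice \(\{k/n\}\) is not symmetric about an arbitrary \(x\), so odd discrete moments are only small (by the decay of \(\Phi\)), not zero, and the extension of the main sum to \(j=N\) cannot rest on that cancellation.
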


\begin{proof}
	By definition of the Kantorovich operator:
	\begin{equation}
		C_n(f, x) = \sum_{k=-\infty}^\infty \left( n \int_0^{\frac{1}{n}} f\left(t + \frac{k}{n}\right) dt \right) \Phi(nx - k).
	\end{equation}
	
	Expand \(f\) using Taylor's theorem around \(x\) up to order \(N-1\):
	\begin{equation}
		f\left( t + \frac{k}{n} \right) = \sum_{j=0}^{N-1} \frac{f^{(j)}(x)}{j!} \left( t + \frac{k}{n} - x \right)^j + R_N\left(t + \frac{k}{n}\right),
	\end{equation}
	where the remainder \(R_N\) satisfies:
	\begin{equation}
		\left|R_N\left(t + \frac{k}{n}\right)\right| \leq \frac{\| f^{(N)} \|_{\infty}}{N!} \left| t + \frac{k}{n} - x \right|^N.
	\end{equation}
	
	Substituting into \(C_n(f, x)\):
	\begin{equation}
		\begin{aligned}
			C_n(f, x) = &\sum_{j=0}^{N-1} \frac{f^{(j)}(x)}{j!} \sum_{k=-\infty}^\infty n \int_0^{\frac{1}{n}} \left( t + \frac{k}{n} - x \right)^j dt \, \Phi(nx - k) \\
			&+ \sum_{k=-\infty}^\infty n \int_0^{\frac{1}{n}} R_N\left(t + \frac{k}{n}\right) dt \, \Phi(nx - k).
		\end{aligned}
	\end{equation}
	
	The \(j=0\) term recovers \(f(x)\) due to \(\sum_{k} \Phi(nx - k) = 1\). Thus:
	\begin{equation}
		C_n(f, x) - f(x) = \sum_{j=1}^{N-1} \frac{f^{(j)}(x)}{j!} C_n\left((\cdot - x)^j\right)(x) + \mathcal{R}_N,
	\end{equation}
	where \(\mathcal{R}_N\) is the integrated remainder term.
	
	\textbf{Decay of \(\Phi\):} By the exponential decay of \(g_{q, \lambda}\), there exist \(C', \gamma' > 0\) such that:
	\begin{equation}
		\Phi(nx - k) \leq C' e^{-\gamma' |nx - k|}.
	\end{equation}
	
	\textbf{Case 1: \(|k/n - x| < n^{-\beta}\).} Let \(\delta = t + \frac{k}{n} - x\). Since \(|t| \leq 1/n\), we have \(|\delta| \leq n^{-\beta} + n^{-1}\). Bounding \(\Phi(nx - k)\) by 1:
	\begin{equation}
		\begin{aligned}
			|\mathcal{R}_N| &\leq \frac{\| f^{(N)} \|_{\infty}}{N!} \sum_{|k/n - x| < n^{-\beta}} n \int_0^{1/n} \left(n^{-\beta} + n^{-1}\right)^N dt \\
			&\leq \frac{\| f^{(N)} \|_{\infty}}{N!} \left(2n^{-\beta}\right)^N \cdot 2n^{1 - \beta} \cdot n^{-1} \\
			&= \mathcal{O}\left(n^{-\beta N}\right).
		\end{aligned}
	\end{equation}
	
	\textbf{Case 2: \(|k/n - x| \geq n^{-\beta}\).} Using exponential decay of \(\Phi\):
	\begin{equation}
		|\mathcal{R}_N| \leq \frac{\| f^{(N)} \|_{\infty}}{N!} \sum_{|k/n - x| \geq n^{-\beta}} n \int_0^{1/n} \left(1 + |k/n - x|\right)^N dt \cdot C e^{-\gamma |nx - k|}
		\leq C'' e^{-\gamma n^{1 - \beta}}.
	\end{equation}
	
	Combining both cases, the total remainder satisfies:
	\begin{equation}
		\mathcal{R}_N = \mathcal{O}\left(n^{-\beta N}\right) + \mathcal{O}\left(e^{-\gamma n^{1 - \beta}}\right) = o\left(n^{-(N - \varepsilon)}\right) \quad \forall \varepsilon > 0.
	\end{equation}
	
	The refined expansion including the \(j=N\) term follows from moment estimates on \(C_n\left((\cdot - x)^N\right)(x)\), which decay as \(n \to \infty\) due to the operator's regularization properties.
\end{proof}

\section{Vonorovskaya-Damasclin Theorem}

\begin{theorem}[Vonorovskaya-Damasclin Theorem]
	Let \(0 < \beta < 1\), \(n \in \mathbb{N}\) sufficiently large, \(x \in \mathbb{R}\), and \(f \in C^N(\mathbb{R})\) with \(f^{(N)} \in C_B(\mathbb{R})\). Let \(\Phi(x)\) be a symmetrized density function defined as:
	\begin{equation}
		\Phi(x) = \frac{M_{q, \lambda}(x) + M_{1/q, \lambda}(x)}{2}, \quad M_{q, \lambda}(x) = \frac{1}{4}\left(g_{q, \lambda}(x+1) - g_{q, \lambda}(x-1)\right),
	\end{equation}
	where \(g_{q, \lambda}(x) = \dfrac{e^{\lambda x} - q e^{-\lambda x}}{e^{\lambda x} + q e^{-\lambda x}}\) is the perturbed hyperbolic tangent function with \(\lambda, q > 0\). Assume the Caputo derivatives satisfy:
	\begin{equation}
		\| D_{*x}^\alpha f \|_{\infty, [x, \infty)} + \| D_{x-}^\alpha f \|_{\infty, (-\infty, x]} < \infty \quad \text{for} \quad \alpha > 0,
	\end{equation}
	where \(N = \lceil \alpha \rceil\). Then the Kantorovich operator \(C_n\) satisfies:
	\begin{equation}
		\begin{aligned}
			C_{n}(f, x) - f(x) &= \sum_{j=1}^{N} \dfrac{f^{(j)}(x)}{j!} C_{n}\left((\cdot - x)^j\right)(x) \\
			&\quad + \frac{1}{\Gamma(\alpha)} \int_{x}^{\infty} \left( D_{*x}^\alpha f(t) - D_{*x}^\alpha f(x) \right) \frac{(t - x)^{\alpha-1}}{n^{\beta(N-\varepsilon)}} \, dt \\
			&\quad + o\left( \frac{1}{n^{\beta(N-\varepsilon)}} \right),
		\end{aligned}
	\end{equation}
	where \(\varepsilon > 0\) is arbitrarily small. Furthermore, if \(f^{(j)}(x) = 0\) for \(j = 1, \dots, N\), then:
	\begin{equation}
		n^{\beta(N-\varepsilon)} \left[ C_n(f, x) - f(x) \right] \to 0 \quad \text{as} \quad n \to \infty.
	\end{equation}
\end{theorem}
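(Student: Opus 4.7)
The plan is to adapt the Kantorovich-operator argument from the earlier theorem in this section, but to replace the classical Taylor expansion of $f$ around $x$ with the Caputo fractional Taylor expansion, and to track the extra fractional remainder term explicitly rather than absorb it into $o(n^{-\beta(N-\varepsilon)})$. Concretely, first I would write
\begin{equation}
C_n(f,x) = \sum_{k=-\infty}^{\infty} \Phi(nx-k)\, n\!\int_{0}^{1/n} f\!\left(t+\tfrac{k}{n}\right) dt,
\end{equation}
and then apply the bilateral Caputo expansion of $f(t+k/n)$ around $x$: for $k/n \geq x$ use $D_{*x}^{\alpha}$ and for $k/n < x$ use $D_{x-}^{\alpha}$. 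This splits the integrand into a polynomial part $\sum_{j=0}^{N-1} f^{(j)}(x)(t+k/n-x)^j/j!$ plus a Riemann--Liouville-type integral remainder with kernel $(t+k/n-s)^{\alpha-1}/\Gamma(\alpha)$.

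Next I would collect the polynomial terms. The $j=0$ term recovers $f(x)$ because the symmetry of $\Phi$ and the normalization (Lemma on positivity/normalization) give $\sum_k \Phi(nx-k) \equiv 1$. The terms $j=1,\dots,N-1$ assemble into $\sum_{j=1}^{N-1} f^{(j)}(x)/j! \cdot C_n((\cdot-x)^j)(x)$. To obtain the sum up to $N$ as in the statement, I would add and subtract the $j=N$ contribution, using that $C_n((\cdot-x)^N)(x)$ itself is controlled by the moment estimates for the symmetrized kernel $\Phi$; this is where the symmetry $\Phi(x)=\tfrac12(M_{q,\lambda}+M_{1/q,\lambda})$ pays off, killing asymmetric moment contributions and leaving a bound of the right order in $n^{-\beta}$.

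The core technical step is the estimate on the fractional remainder
\begin{equation}
\mathcal{R} = \sum_{k} \Phi(nx-k)\, n\!\int_{0}^{1/n}\!\int_{x}^{t+k/n} \frac{(t+k/n-s)^{\alpha-1}}{\Gamma(\alpha)}\bigl(D_{*x}^{\alpha}f(s)-D_{*x}^{\alpha}f(x)\bigr)\,ds\,dt,
\end{equation}
plus its $D_{x-}^{\alpha}$ counterpart. As in the earlier Kantorovich proof, I would split the $k$-sum into the near region $|k/n - x| < n^{-\beta}$ and the far region $|k/n - x| \geq n^{-\beta}$. In the near region, the integrand is controlled by $\|D_{*x}^{\alpha}f\|_{\infty}(1/n + n^{-\beta})^{\alpha}$, and the finite-measure estimate from the symmetrized density gives a contribution of order $n^{-\alpha\beta}$, which is absorbed into $o(n^{-\beta(N-\varepsilon)})$ since $N = \lceil\alpha\rceil \leq \alpha + 1$ and $\varepsilon > 0$ is arbitrary. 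In the far region, the exponential decay $\Phi(nx-k) \leq C e^{-\gamma|nx-k|}$ forces a bound of order $e^{-\gamma n^{1-\beta}}$, which is superpolynomially small. After this splitting, the piece of $\mathcal{R}$ that is not negligible is precisely the principal fractional integral displayed in the theorem, which is what I would isolate and retain as the explicit $\Gamma(\alpha)^{-1}\int_x^\infty$ term before bundling the rest into $o(n^{-\beta(N-\varepsilon)})$.

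The main obstacle will be the bookkeeping in the far region: the Caputo remainder involves integration from $x$ to $t+k/n$, so for large $|k/n-x|$ the integral itself grows like $|k/n-x|^{\alpha}$, and one must verify that the product $|k/n-x|^{\alpha}\,\Phi(nx-k)$ still yields the exponentially small tail even after summing over $k$. This requires carefully using the bilateral finiteness assumption $\|D_{*x}^{\alpha}f\|_{\infty,[x,\infty)} + \|D_{x-}^{\alpha}f\|_{\infty,(-\infty,x]} < \infty$ together with convergent series bounds of the form $\sum_{m \geq n^{1-\beta}} m^{\alpha} e^{-\gamma m}$; once this is done, the final vanishing statement for the case $f^{(j)}(x)=0$ ($j=1,\dots,N$) is immediate, because then all polynomial terms disappear and only the remainder, which is $o(n^{-\beta(N-\varepsilon)})$, survives.
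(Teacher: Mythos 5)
Your proposal follows essentially the same route as the paper's own proof: the Caputo fractional Taylor expansion substituted into the Kantorovich sum, recovery of $f(x)$ from the $j=0$ term via $\sum_k \Phi(nx-k)=1$, and the near/far splitting at $|k/n-x| = n^{-\beta}$ with the H\"older-type bound of order $n^{-\alpha\beta}$ in the near region and the exponential tail $e^{-\gamma n^{1-\beta}}$ in the far region. If anything, you are slightly more careful than the paper at two points it glosses over --- the bilateral use of $D_{*x}^{\alpha}$ and $D_{x-}^{\alpha}$ on either side of $x$, and the add-and-subtract of the $j=N$ moment term needed to reconcile the order-$(N-1)$ expansion with the sum up to $N$ in the statement --- so no substantive divergence to report.
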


\begin{proof}
	For \(f \in C^N(\mathbb{R})\), we use the Caputo fractional Taylor expansion around \(x\):
	\begin{equation}
		f\left(\frac{k}{n}\right) = \sum_{j=0}^{N-1} \frac{f^{(j)}(x)}{j!} \left(\frac{k}{n} - x\right)^j + \frac{1}{\Gamma(\alpha)} \int_x^{\frac{k}{n}} \left(\frac{k}{n} - t\right)^{\alpha-1} \left( D_{*x}^\alpha f(t) - D_{*x}^\alpha f(x) \right) dt.
	\end{equation}
	The remainder term is controlled by the Hölder continuity of \(D_{*x}^\alpha f\), implied by the boundedness assumption.
	
	Substitute into \(C_n(f, x)\):
	\begin{equation}
		C_n(f, x) = \underbrace{\sum_{j=0}^{N-1} \frac{f^{(j)}(x)}{j!} C_n\left((\cdot - x)^j\right)(x)}_{\text{Main Terms}} + \underbrace{\sum_{k=-\infty}^\infty n \int_0^{\frac{1}{n}} R_N\left(t + \frac{k}{n}\right) dt \, \Phi(nx - k)}_{\text{Remainder } \mathcal{R}_N}.
	\end{equation}
	The \(j=0\) term equals \(f(x)\) due to \(\sum_{k} \Phi(nx - k) = 1\).
	
	Split \(\mathcal{R}_N\) into near and far terms relative to \(x\):
	
	\textbf{Case 1: \(|k/n - x| < n^{-\beta}\).} Using the Hölder condition for \(D_{*x}^\alpha f\):
	\begin{equation}
		\begin{aligned}
			\left| R_N\left(t + \frac{k}{n}\right) \right| &\leq \frac{1}{\Gamma(\alpha)} \left\| D_{*x}^\alpha f - D_{*x}^\alpha f(x) \right\|_{\infty} \int_x^{t + k/n} \left(t + \frac{k}{n} - u\right)^{\alpha-1} du \\
			&\leq \frac{C}{\Gamma(\alpha + 1)} \left( n^{-\beta} + n^{-1} \right)^\alpha.
		\end{aligned}
	\end{equation}
	
	Summing over \(k\) in this region:
	\begin{equation}
		|\mathcal{R}_N| \leq \frac{C'}{\Gamma(\alpha + 1)} \left( n^{-\beta} \right)^{\alpha - \varepsilon}.
	\end{equation}
	
	\textbf{Case 2: \(|k/n - x| \geq n^{-\beta}\).} Using exponential decay of \(\Phi\):
	\begin{equation}
		\begin{aligned}
			\Phi(nx - k) &\leq C e^{-\gamma |nx - k|} \leq C e^{-\gamma n^{1 - \beta}} \\
			|\mathcal{R}_N| &\leq \frac{\| f^{(N)} \|_{\infty}}{N!} \sum_{|k/n - x| \geq n^{-\beta}} n \int_0^{1/n} \left(1 + |k/n - x|\right)^N dt \cdot C e^{-\gamma n^{1 - \beta}} \\
			&\leq C'' e^{-\gamma n^{1 - \beta}}.
		\end{aligned}
	\end{equation}
	
	Combine both cases:
	\begin{equation}
		\mathcal{R}_N = \mathcal{O}\left(n^{-\beta(\alpha - \varepsilon)}\right) + \mathcal{O}\left(e^{-\gamma n^{1 - \beta}}\right) = o\left(n^{-\beta(N - \varepsilon)}\right).
	\end{equation}
	
	When \(f^{(j)}(x) = 0\) for \(1 \leq j \leq N\), the polynomial terms vanish, leaving only the fractional remainder and exponential decay terms.
	
	Multiplying by \(n^{\beta(N - \varepsilon)}\):
	\begin{equation}
		n^{\beta(N - \varepsilon)} \mathcal{R}_N \leq C''' n^{\beta(N - \varepsilon)} \cdot n^{-\beta(N - \varepsilon)} + C'''' n^{\beta(N - \varepsilon)} e^{-\gamma n^{1 - \beta}} \to 0 \quad \text{as} \quad n \to \infty.
	\end{equation}
	This completes the proof.
\end{proof}

\section{Results}

The following theorems establish quantitative convergence rates for our neural operators, addressing fundamental tradeoffs between localization sharpness (\(\lambda\)), activation symmetry (\(q\)), and network depth (\(L\)). Building upon the operator calculus framework of \cite{Anastassiou2024} and extending it to fractional domains via \cite{Samko1993}, we present four key results:

\begin{itemize}
	\item \textbf{Basic Operator Convergence (Theorem 3.1):} Demonstrates exponential improvement with \(\lambda\), though stability compromises are necessary.
	\item \textbf{Kantorovich Enhancements (Theorem 3.2):} Mitigates instability through integral averaging, providing robust convergence.
	\item \textbf{Deep Networks (Theorem 3.3):} Transforms spatial resolution (\(n\)) into depth efficiency (\(L\)), optimizing deep architectures.
	\item \textbf{Fractional Stability (Theorem 4.1):} Quantifies how \(q\)-asymmetry propagates Caputo errors, ensuring robustness under parameter perturbations.
\end{itemize}

This geometric perspective addresses the challenge of visualizing neural parameter tradeoffs, as proposed by \cite{Haykin1998}, while refining depth guidelines through fractional calculus. The concrete bounds below enable practitioners to:

\begin{itemize}
	\item Select \(\lambda\) for target localization in infinite domains.
	\item Tune \(q\) for asymmetric feature detection as per \cite{Diethelm2010}.
	\item Optimize \(L\) given hardware constraints.
\end{itemize}

\begin{itemize}
	\item \textbf{Basic Operators:} For \(f \in C^N(\mathbb{R})\) with \(f^{(N)} \in C_B(\mathbb{R})\):
	\[
	\|B_n(f, x) - f(x)\| \leq \underbrace{\mathcal{O}(n^{-\beta N})}_{\text{Polynomial Decay}} + \underbrace{\mathcal{O}(e^{-\gamma n^{1-\beta}})}_{\text{Exponential Cutoff}}
	\]
	with \(n^{\beta(N-\varepsilon)}|B_n(f, x) - f(x)| \to 0\) when \(f^{(j)}(x) = 0\) (\(1 \leq j \leq N\)), sharpening \cite{Anastassiou1997}'s rates through our density operators.
	
	\item \textbf{Kantorovich Operators:} Enhanced convergence via integral averaging:
	\[
	C_n(f, x) - f(x) = \sum_{j=1}^N \frac{f^{(j)}(x)}{j!} C_n((\cdot - x)^j)(x) + o(n^{-\beta(N-\varepsilon)})
	\]
	overcoming \cite{ElSayed2006}'s regularity constraints through our symmetrized \(\Phi(x)\).
	
	\item \textbf{Deep Networks:} Depth-dependent convergence for \(L\)-layer networks:
	\[
	\|\mathcal{N}(f, x) - f(x)\| = \mathcal{O}(L^{-\beta(N-\varepsilon)})
	\]
	providing theoretical justification for deep architectures in fractional applications \cite{Frederico2007}.
	
	\item \textbf{Fractional Stability:} Caputo derivative bounds:
	\[
	\|C_n(f, x; q) - C_n(f, x; 1)\| \leq \delta n^{-\beta(N-\varepsilon)} \|D_{\ast x}^\alpha f\|_\infty
	\]
	quantifying \cite{Anastassiou2023}'s parameter sensitivity through operator calculus.
\end{itemize}

\section{Conclusions}

This work establishes neural operators as mathematically rigorous tools for fractional calculus through three fundamental contributions:

\begin{enumerate}
	\item \textbf{Convergence Rate Quantification:} Novel Voronovskaya-Damasclin expansions precisely relate network depth (\(L\)), function smoothness (\(N\)), and spatial localization (\(\beta\)) to approximation error decay rates. This resolves open questions in previous studies about deep operator networks, providing a clear framework for understanding convergence behavior in complex scenarios.
	
	\item \textbf{Fractional Error Control:} The first proof of \(\mathcal{O}(n^{-\beta(N-\varepsilon)})\) bounds for Caputo derivative approximations using symmetrized density operators \(\Phi(x)\) overcomes regularity limitations in existing literature. This advancement ensures robust error control in fractional calculus applications, enhancing the reliability of neural network approximations.
	
	\item \textbf{Design Principles:} Practical criteria for balancing activation steepness (\(\lambda\)), asymmetry (\(q\)), and layer count (\(L\)) are provided based on target application constraints. These guidelines enable the optimization of neural network architectures for specific tasks, bridging the gap between theoretical insights and practical implementations.
\end{enumerate}

By unifying neural approximation theory with fractional calculus, this work not only provides foundational mathematical insights but also offers deployable engineering solutions. The results pave the way for certified scientific AI systems, with potential applications in multiscale signal processing, turbulence modeling, and beyond. Future research could explore extensions to stochastic and multivariate settings, further expanding the scope and impact of these findings.

%Three transformative research directions emerge:
%
%\begin{itemize}
%	\item \textbf{Stochastic Operator Calculus:} Development of noise-robust architectures via coupling with fractional Brownian motion, enabling solutions for financial partial differential equations (PDEs) under market uncertainties.
%	
%	\item \textbf{High-Dimensional Generalization:} Construction of tensor network operators \(\bigotimes_k \Phi_k(x_k)\) with dimension-free convergence rates via entanglement entropy bounds.
%	
%	\item \textbf{Hardware-Aware Optimization:} Co-design of activation parameters (\(\lambda, q\)) and FPGA/quantum mappings using adjoint sensitivity analysis.
%\end{itemize}
%
%
%By unifying neural approximation theory with fractional operator calculus, this work provides both foundational mathematical insights (through stability proofs in Sobolev-Fock spaces) and deployable engineering solutions (via open-source codebase \texttt{NeuroFractional.jl}). The results bridge symbolic mathematics with data-driven learning, charting a path toward certified scientific AI systems for 21st-century multiscale challenges.

%\newpage	
\section{List of Symbols, Nomenclature, Parameters, and Function Spaces}

\begin{table}[ht]
	\centering
	\begin{tabular}{|c|l|}
		\hline
		\textbf{Symbol/Parameter} & \textbf{Description} \\
		\hline
		\(g_{q, \lambda}(x)\) & Perturbed hyperbolic tangent activation function \\
		\(\lambda\) & Scaling parameter controlling the steepness of the activation function \\
		\(q\) & Deformation coefficient introducing asymmetry in the activation function \\
		\(M_{q, \lambda}(x)\) & Density function derived from the perturbed hyperbolic tangent function \\
		\(\Phi(x)\) & Symmetrized density function \\
		\(B_n(f, x)\) & Basic neural network operator \\
		\(C_n(f, x)\) & Kantorovich neural network operator \\
		\(f^{(j)}(x)\) & \(j\)-th derivative of the function \(f\) at point \(x\) \\
		\(D_{*x}^{\alpha} f\) & Caputo fractional derivative of order \(\alpha\) \\
		\(N\) & Order of the highest derivative considered \\
		\(\beta\) & Parameter controlling the rate of convergence \\
		\(n\) & Number of nodes or samples in the neural network \\
		\(\varepsilon\) & Small positive parameter used in error bounds \\
		\(\alpha\) & Fractional order of differentiation \\
		\(\Gamma(\alpha)\) & Gamma function evaluated at \(\alpha\) \\
		\(\delta\) & Small perturbation parameter \\
		\(\| \cdot \|_{\infty}\) & Supremum norm (infinity norm) \\
		\(C_B(\mathbb{R})\) & Space of bounded and continuous functions on \(\mathbb{R}\) \\
		\(L_{\infty}(\mathbb{R})\) & Space of essentially bounded functions on \(\mathbb{R}\) \\
		\(AC^N(\mathbb{R})\) & Space of absolutely continuous functions up to order \(N\) \\
		\hline
	\end{tabular}
	\caption{List of symbols, nomenclature, parameters, and function spaces used throughout the document.}
	\label{tab:symbols_nomenclature}
\end{table}

\section*{Acknowledgments}

Santos gratefully acknowledges the support of the PPGMC Program for the Postdoctoral Scholarship PROBOL/UESC nr. 218/2025. Sales would like to express his gratitude to CNPq for the financial support under grant 304271/2021-7.

\newpage

\end{document}